\documentclass{article}

\usepackage[main, final, nonatbib]{neurips_2026}

\usepackage[utf8]{inputenc} %
\usepackage[T1]{fontenc}    %
\usepackage{hyperref}       %
\usepackage{url}            %
\usepackage{booktabs}       %
\usepackage{amsfonts}       %
\usepackage{nicefrac}       %
\usepackage{microtype}      %
\usepackage{xcolor}         %

\usepackage{float}
\newtheorem{theorem}{Theorem}

\usepackage{url}
\usepackage{footnote}
\usepackage[bottom]{footmisc}
\makeatletter
\g@addto@macro{\UrlBreaks}{\UrlOrds}
\makeatother
\usepackage{makecell}

\usepackage{epsfig}
\usepackage{graphicx}
\usepackage{amsmath}
\usepackage{amssymb}
\usepackage{bbm}
\usepackage{soul}

\usepackage{float}
\usepackage{pifont}%

\newcommand{\cmark}{\ding{51}}%

\usepackage{xspace}

\usepackage{multirow}
\usepackage{color, colortbl}
\usepackage{booktabs}
\usepackage{algorithm, algpseudocode}
\usepackage{tcolorbox}
\usepackage{tabularx}
\usepackage{arydshln}
\usepackage{caption}

\usepackage{dblfloatfix}

\usepackage{tikz}
\usetikzlibrary{bayesnet}
\usetikzlibrary{arrows}
\usepackage{float}

\usepackage{wrapfig}

\definecolor{main}{HTML}{5989cf}    %
\definecolor{sub}{HTML}{cde4ff}     %
\definecolor{greyborder}{HTML}{808080}
\definecolor{lightgrey}{HTML}{D3D3D3}

\tcbset{
    sharp corners,
    colback = white,
    before skip = 0.2cm,    %
    after skip = 0.5cm      %
}                           %

\newtcolorbox{boxA}{
    fontupper = \bf,
    boxrule = 1.5pt,
    colframe = black %
}

\newtcolorbox{boxB}{
    fontupper = \bf\color{main}, %
    boxrule = 1.5pt,
    colframe = main,
    rounded corners,
    arc = 5pt   %
}

\newtcolorbox{boxC}{
    colback = sub, %
    boxrule = 0pt  %
}

\newtcolorbox{boxD}{
    colback = lightgrey, 
    colframe = greyborder, 
    boxrule = 0pt, 
    toprule = 3pt, %
    bottomrule = 3pt %
}

\newtcolorbox{boxE}{
    enhanced, %
    boxrule = 0pt, %
    borderline = {0.75pt}{0pt}{main}, %
    borderline = {0.75pt}{2pt}{sub} %
}

\newtcolorbox{boxF}{
    colback = sub,
    enhanced,
    boxrule = 1.5pt, 
    colframe = white, %
    borderline = {1.5pt}{0pt}{main, dashed} %
}

\newtcolorbox{boxG}{
    enhanced,
    boxrule = 0pt,
    colback = sub,
    borderline west = {1pt}{0pt}{main}, 
    borderline west = {0.75pt}{2pt}{main}, 
    borderline east = {1pt}{0pt}{main}, 
    borderline east = {0.75pt}{2pt}{main}
}

\newtcolorbox{boxH}{
    colback = sub, 
    colframe = main, 
    boxrule = 0pt, 
    leftrule = 6pt %
}

\newtcolorbox{boxI}{
    colback = sub, 
    colframe = main, 
    boxrule = 0pt, 
    toprule = 6pt %
}

\newtcolorbox{boxJ}{
    sharpish corners, %
    colback = sub, 
    colframe = main, 
    boxrule = 0pt, 
    toprule = 4.5pt, %
    enhanced,
    fuzzy shadow = {0pt}{-2pt}{-0.5pt}{0.5pt}{black!35} %
}

\newtcolorbox{boxK}{
    sharpish corners, %
    boxrule = 0pt,
    toprule = 4.5pt, %
    enhanced,
    fuzzy shadow = {0pt}{-2pt}{-0.5pt}{0.5pt}{black!35} %
}

\newtcolorbox{boxL}{
    fontupper = \color{main},
    rounded corners,
    arc = 6pt,
    colback = sub, 
    colframe = main!50, 
    boxrule = 0pt, 
    bottomrule = 4.5pt 
}

\newtcolorbox{boxM}{
    fontupper = \color{white},
    rounded corners,
    arc = 6pt,
    colback = main!80, 
    colframe = main, 
    boxrule = 0pt, 
    bottomrule = 4.5pt,
    enhanced,
    fuzzy shadow = {0pt}{-3pt}{-0.5pt}{0.5pt}{black!35}
}

\usepackage[colorinlistoftodos]{todonotes}

\newcounter{mycomment}

\title{
Agentic Multi-Turn Reasoning: A Fairness Approach
}

\author{%
Thanh-Dat Truong$^{1}$, Sankalp Pandey$^{1}$, Hugh Churchill$^{2}$, Jackson Cothren$^{3}$\\\textbf{Marios Savvides}$^{4}$, \textbf{Khoa Luu}$^{1}$\\
$^{1}$CVIU Lab, University of Arkansas, USA \quad
$^{2}$Dep. of Physics, University of Arkansas, USA \\
$^{3}$Dep. of Geosciences, University of Arkansas, USA \quad $^{4}$Carnegie Mellon University, USA\\
\tt\small \{tt032,  sankalpp, hchurch, jcothre, khoaluu\}@uark.edu, \tt\small marioss@andrew.cmu.edu\\
\tt\small \url{https://uark-cviu.github.io/projects/Phi-MPO}
}

\begin{document}

\maketitle

\begin{abstract}

Recent advances in Large Language Models (LLMs) have enabled agentic systems capable of solving complex tasks through multi-turn planning, tool use, verification, and memory updates. However, learning agentic systems remains difficult due to two fundamental challenges, i.e., (1) long-horizon credit assignment, where supervision is available only at the final outcome, and (2) imbalanced data distributions, where dominant data patterns bias optimization and weaken adaptation to rare but informative reasoning behaviors. In this paper, we propose Fair Multi-Level Preference Optimization
(Fair-MPO or $\Phi$-MPO), a new preference optimization framework for agentic learning. We first show that Multi-Level Preference Optimization provides a principled and more computationally efficient framework for long-horizon reasoning. Then, we introduce a Fair Multi-Level Objective that addresses imbalance in agentic learning. We provide a comprehensive theoretical analysis demonstrating that our approach addresses both long-horizon reasoning and data imbalance. Our experiments on agentic reasoning benchmarks demonstrate that our approach achieves State-of-the-Art (SOTA) performance. 

\end{abstract}
\begin{figure}[H]
    \centering
    \vspace{-5mm}
    \includegraphics[width=0.8\linewidth]{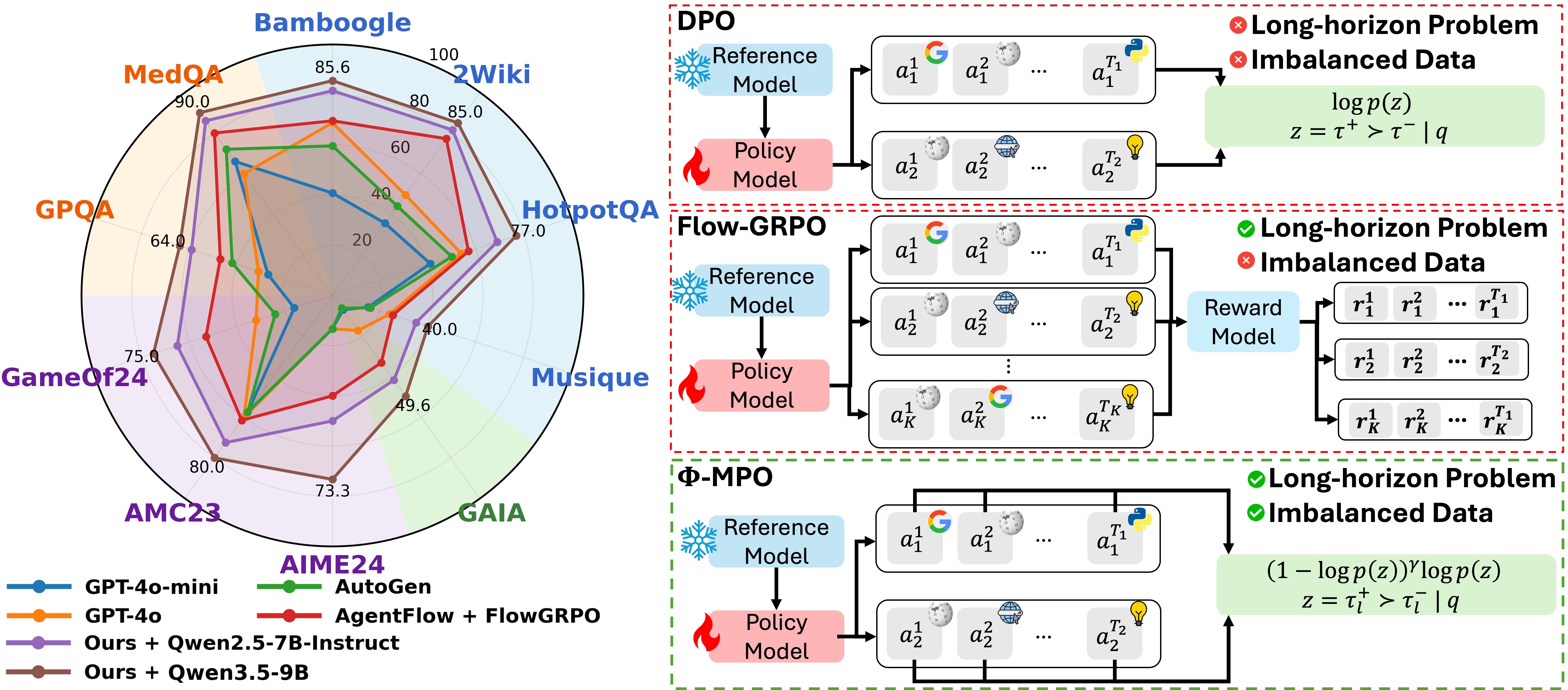}
    \caption{\textbf{Overview of the proposed $\Phi$-MPO approach.} Left: $\Phi$-MPO consistently outperforms GPT-4o, AgentFlow + FlowGRPO across four benchmarks. Right: Compared with DPO and Flow-GRPO, $\Phi$-MPO addresses both long-horizon reasoning and imbalanced problems. 
    }
    \label{fig:highlight}
    \vspace{-6mm}
\end{figure}

\section{Introduction}

The success of Large Language Models (LLMs) \cite{achiam2023gpt, comanici2025gemini, truong2026directed} has motivated a new class of \emph{agentic} systems that reason over multiple steps, interact with external tools, maintain intermediate memory, and refine decisions based on feedback \cite{li2025search, jin2025search, li2025flow, guo2025deepseek}.
Unlike single-turn generation, agentic reasoning is a multi-turn process in which the model must determine \emph{when} to plan, \emph{when} to call tools, \emph{how} to update memory, and \emph{how} to recover from mistakes, with each action shaping subsequent reasoning states and final results \cite{jin2025search, li2025flow, song2025r1, jiang2026xskill}.
However, most existing agentic systems remain training-free, relying on prompting heuristics, handcrafted coordination rules, or static orchestration strategies \cite{wu2024autogen, hong2023metagpt}, which limits their adaptability to new environments, tools, and task distributions. Therefore, reliable and scalable multi-turn reasoning requires a \textbf{learning paradigm for agentic systems} that optimizes reasoning trajectories, tool usage, and module coordination based on interaction outcomes (Figure~\ref{fig:highlight}).

\begin{wrapfigure}{r}{0.5\linewidth}
    \centering
    \vspace{-5mm}
    \includegraphics[width=1.0\linewidth]{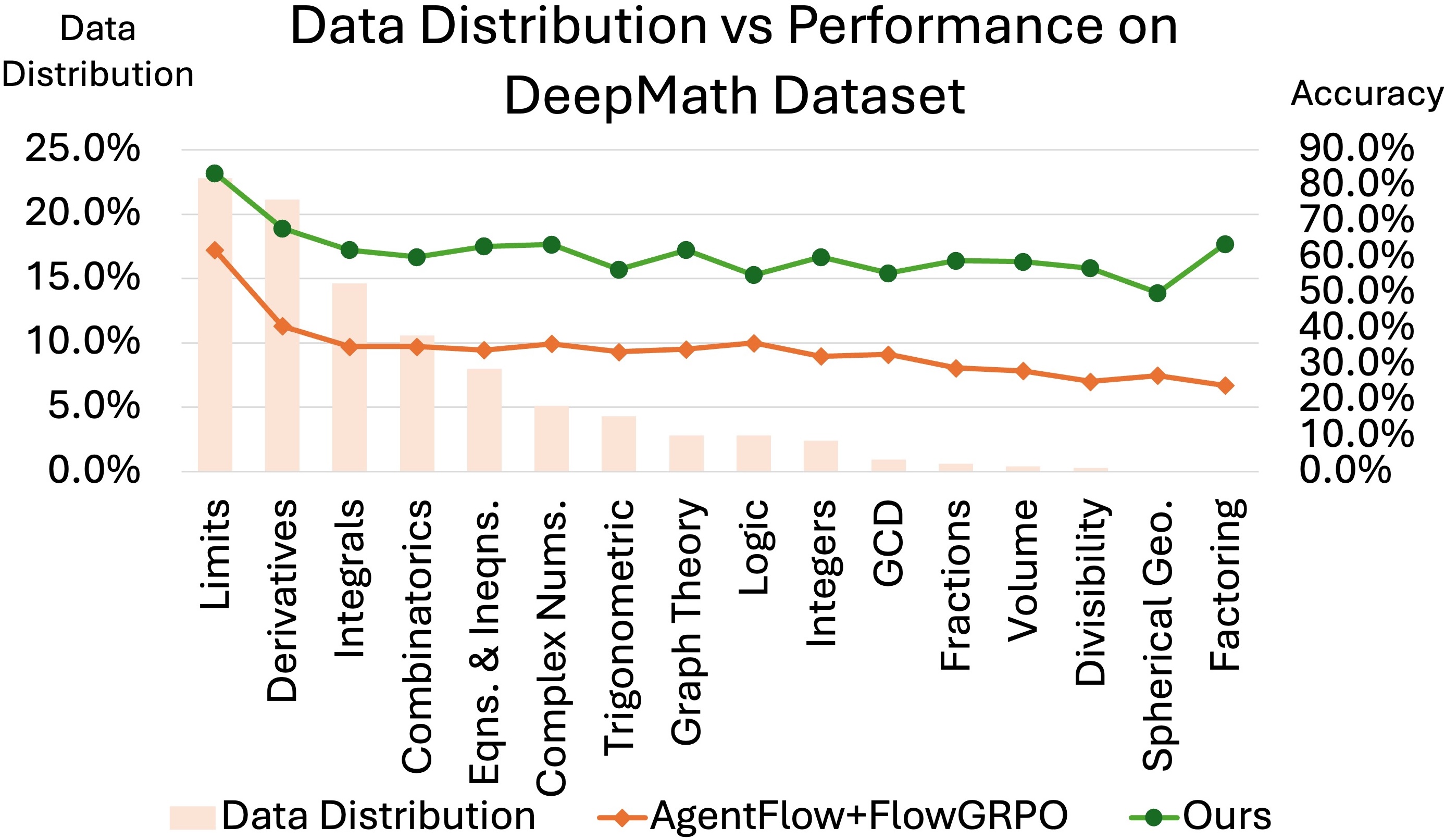}
    \vspace{-6mm}
    \caption{The Influence of Imbalance Data. 
    }
    \label{fig:deepmath-data-dist}
    \vspace{-4mm}
\end{wrapfigure}
In learning agentic systems, two fundamental challenges arise: the \textbf{long-horizon problem} and the \textbf{imbalance problem}. The long-horizon problem arises because agentic reasoning unfolds over many interdependent steps, while supervision is typically available only at the final outcome \cite{li2025flow}. Thus, early mistakes can propagate through planning, tool use, and memory updates, making it difficult to assign credit to intermediate decisions. Meanwhile, agentic learning often relies on imbalanced reasoning data, e.g., DeepMath \cite{he2025deepmath}, where topics, task types, difficulty levels, and reasoning patterns appear at highly uneven frequencies. As shown in Figure~\ref{fig:deepmath-data-dist}, this imbalance directly affects model performance. These challenges are amplified by the structured nature of agentic rollouts. Unlike single-output generation, an agentic rollout is a trajectory $\tau=(state_1,action_1,\dots,state_T,action_T)$, where each state depends on prior reasoning, tool feedback, and memory updates. Thus, data-level imbalance propagates into trajectory-level imbalance: dominant topics and familiar tool-use patterns produce many redundant trajectories, while successful trajectories for rare or difficult cases remain scarce. This skew is further compounded by error propagation, e.g., an incorrect search query may retrieve irrelevant evidence, mislead later reasoning, and ultimately produce an incorrect answer. Therefore, a principled learning framework must address both long-horizon credit assignment and the propagation of imbalance from data to trajectories.

Recent studies have made important progress, but remain inadequate under long horizons and imbalanced data. Outcome-based reinforcement learning (RL) methods \cite{shao2024deepseekmath, yu2025dapo, song2025r1} and in-the-flow optimization frameworks \cite{li2025flow} optimize trajectory generation from final rewards, yet often suffer from high variance, unstable optimization, and weak credit assignment when rewards are sparse. 
Meanwhile, Proximal Policy Optimization (PPO) \cite{schulman2017proximal}, Group Relative Policy Optimization (GRPO) \cite{shao2024deepseekmath}, or Flow-GRPO \cite{li2025flow} improve policy learning from interaction trajectories, with GRPO using group-relative normalization and Flow-GRPO propagating final rewards across intermediate rollout steps. However, credit assignment remains coarse because the same outcome reward is shared across many decisions that differ in causal importance. Moreover, when successful rollouts are rare, optimization can be dominated by unsuccessful or low-quality trajectories, leading to biased updates and weak recovery from early mistakes.
Direct Preference Optimization (DPO) provides a more stable alternative, learning from pairwise comparisons without explicit reward modeling or policy-gradient updates \cite{rafailov2023direct}. However, standard DPO is designed for single-output preference and does not explicitly model the sequential structure of agentic trajectories. 
Since DPO relies on informative and reasonably balanced preference pairs, it is also vulnerable to the highly skewed distributions produced by agentic rollouts. Thus, neither existing RL-based methods nor standard DPO fully addresses the dual challenge of long horizon and imbalance in agentic learning.

\noindent
\textbf{Contributions.}
In this work, we propose \textbf{Fair Multi-Level Preference Optimization} (\textbf{Fair-MPO} or \textbf{$\Phi$-MPO}) for learning multi-turn agentic reasoning with long-horizon feedback and an imbalanced data distribution. Our contributions can be summarized as follows. First, we introduce a new \textbf{Multi-level Preference Learning} (MPO) framework for agentic systems that models each rollout as a structured reasoning trajectory and optimizes the policy through both trajectory-level and state-level supervision. Second, we identify the \textbf{imbalance problem in agentic learning} followed by developing a new \textbf{\textbf{$\Phi$-MPO} learning objective} that improves robustness to skewed distributions. 
Third, we provide a theoretical analysis from two complementary perspectives: (1) we first show that DPO shares the same pairwise optimization direction as GRPO, yielding a more computationally efficient alternative. and (2) we then show that our fair multi-level learning objective mitigates optimization bias under imbalanced data.
Finally, extensive experiments and ablation studies on standard benchmarks demonstrate that our approach improves final reasoning accuracy over prior methods. Our results establish $\Phi$-MPO as an effective learning framework for agentic systems in dynamic multi-turn environments.

\section{Related Work}

\noindent
\textbf{Agentic Learning.} Recent studies formulates tool-integrated reasoning as a RL problem, where a LLM learns to interleave reasoning with external actions.
Most methods adopt a monolithic policy that decides when to reason, when to invoke tools, and how to incorporate feedback. 
This paradigm has shown strong results in mathematical reasoning \cite{mai2025agent,xue2025simpletir,feng2025retool,li2025torl, jiang2026xskill, truong2026directed, truong2026mango, truong2025insect}, web-based question answering \cite{chen2025learning,jin2025search,song2025r1,li2025search,sun2025zerosearch}, and multi-tool environments through data synthesis, unified training pipelines, and reward design \cite{dong2025tool,jiang2025verltool,qian2025toolrl,zhang2025nemotron}. 
However, as task complexity and planning horizon increase, monolithic optimization faces severe long-horizon credit assignment, since final outcomes must be attributed to many intermediate reasoning steps, routing decisions, and tool calls \cite{zeng2025reinforcing,wang2025stepsearch}. 
This can lead to unstable optimization, 
including inefficient tool use \cite{wang2025acting,qian2025smart}, weak personalization \cite{cheng2025toolspectrum}, and poor alignment with user preferences \cite{huang2025ttpa}.
Recent agentic systems offer an alternative by decomposing reasoning into specialized modules or agents for planning, execution, verification, and coordination. 
Many systems are training-free and orchestrate pretrained LLMs through prompting, role assignment, or handcrafted control logic \cite{wu2024autogen,hong2023metagpt,lu2025octotools}. 
While modularity improves flexibility, static orchestration limits adaptation because collaboration strategies are prescribed rather than learned from interaction. 
Recent methods train agentic systems for improved coordination and tool use \cite{deng2025pe,liao2025marft}, but often rely on offline supervised fine-tuning or preference optimization over static datasets \cite{motwani2024malt,park2025maporl}. 
Thus, training remains disconnected from live multi-turn agent-tool dynamics, limiting adaptation to evolving observations, recovery from early mistakes, and plan revision over long horizons.

\noindent
\textbf{Preference Learning and Policy Optimization.}
Preference learning and policy optimization are widely used to align LLMs with human or task-level feedback \cite{ziegler2019fine,christiano2017deep}. 
Standard Reinforcement Learning from Human Feedback (RLHF) trains a reward model from ranked outputs and then optimizes the policy with PPO \cite{schulman2017proximal,ouyang2022training}. 
DPO \cite{rafailov2023direct, truong2026phi, liu2025focalpo} instead directly optimizes preference pairs, avoiding explicit reward modeling.
For reasoning, GRPO \cite{shao2024deepseekmath} uses group-relative comparisons to stabilize policy optimization, while Flow-GRPO \cite{li2025flow} extends on-policy optimization to long-horizon agentic rollouts with sparse trajectory-level rewards. 
Despite this progress, RL methods still provide coarse supervision under delayed feedback, and standard preference optimization remains designed for single-output comparisons rather than structured, imbalanced, and multi-step agentic trajectories.

\section{The Proposed Fair Multi-level Preference Optimization ($\Phi$-MPO) Approach}

In this section, we propose a new learning approach that enables the optimization of agentic systems within a multi-turn reasoning loop.
Formally, given an input query $q$ and a predefined toolset $\mathcal{K}$, our goal is to learn a policy that composes tool-usage actions to solve the query through iterative interaction.
Following \cite{li2025flow, jin2025search}, the agentic reasoning is decomposed into four modules: 
$\textbf{Action Planner } \mathcal{P}$, 
$\textbf{Tool Executor } \mathcal{E}$, 
$\textbf{Execution Verifier } \mathcal{V}$, 
and $\textbf{Solution Generator } \mathcal{G}$. 
These modules communicate through a shared evolving memory $\mathcal{M}$, which stores intermediate reasoning traces, tool outputs, and verification feedback. 
The overall process is formulated as a multi-turn Markov Decision Process (MDP). 
Let $\mathcal{M}^t$ denote the memory before turn $t$, with $\mathcal{M}^1$ initialized from $q$. 
At each turn, the planner $\mathcal{P}$ is modeled as a trainable policy $\pi_\theta$ that samples an action $a^t \sim \pi_\theta(a^t \mid q,\mathcal{K},\mathcal{M}^t)$, 
where $a^t$ specifies the current reasoning decision, including sub-goal generation, tool selection, and memory-conditioned retrieval. 
Conditioned on $a^t$, the executor invokes the selected tool and returns an observation $e^t \sim \mathcal{E}(e^t \mid a^t,\mathcal{K})$. 
The verifier then produces a binary signal $v^t \sim \mathcal{V}(v^t \mid q,e^t,\mathcal{M}^t)$ indicating whether the accumulated evidence is sufficient to generate the final answer. 
If $v^t=0$, the memory is updated as $\mathcal{M}^{t+1} = f_{\mathrm{mem}}(\mathcal{M}^t,a^t,e^t,v^t)$, 
providing an explicit and controllable representation of intermediate reasoning states. 
The process continues until $v^t=1$ or the maximum number of turns is reached. 
At the final turn $T$, the solution genera tor produces the output $o \sim \mathcal{G}(o \mid q,\mathcal{M}^T)$, 
conditioned on both the query and the accumulated reasoning evidence. Then, 
the multi-turn reasoning process defines a structured trajectory $\tau = \{(a^t, e^t, v^t)\}_{t=1}^{T}$, which records the sequence of planning decisions, tool interactions, and verification signals as in Eqn. \eqref{eqn:joint_policy_rewrite}.
\begin{equation}
\label{eqn:joint_policy_rewrite}
\footnotesize
\begin{aligned}
\pi_\theta\!\left(\tau, o \mid q\right) = \Bigg[ \prod_{t=1}^{T} \pi_\theta(a^t \mid q, K, \mathcal{M}^t) \; \mathcal{E}(e^t \mid a^t, K) \; \mathcal{V}(v^t \mid q, e^t, \mathcal{M}^t) \Bigg] \; \mathcal{G}(o \mid q, \mathcal{M}^t).
\end{aligned}
\end{equation}
Similar to \cite{li2025flow}, Eqn. \eqref{eqn:joint_policy_rewrite} decomposes complex reasoning into a sequence of observable transitions, allowing the reasoning trajectory to be explicitly represented through structured intermediate states. 
The planner $\pi_\theta$ is optimized within the reasoning loop, enabling direct improvement of intermediate decisions and more fine-grained credit assignment across multi-turn, tool-integrated trajectories.

\noindent
\textbf{Agentic Learning.} The Action Planner policy $\pi_\theta$ can be optimized via Reinforcement Learning from Human Feedback (RLHF). In particular, let $R(\tau)$ be the reward model to evaluate the correctness and efficiency of a complete trajectory $\tau$. The planner can be optimized as follows:
\begin{equation}
\footnotesize
\label{eqn:rlhf_constrained}
\begin{split}
\pi_{\theta}^\star \;=\; \arg\!\max_{\pi_{\theta}}\; \mathbb{E}_{q \sim \mathcal{Q}}\mathbb{E}_{\tau \sim \pi_\theta(\tau \mid q)| }\big[ R(\tau)\big] 
\quad\text{s.t.}\quad
D_{\mathrm{KL}}\!\big(\pi_t(\cdot\mid q)\,\big\|\,\pi_{ref}(\cdot\mid q)\big)\le \delta,
\end{split}
\end{equation}
where $\pi_{ref}$ is the reference policy, $D_\textrm{KL}(\pi_{\theta} \| \pi_{ref})$ is the KL divergence to measure the difference between two policies, and $\tau$ is a trajectory rollout.
In multi-turn reasoning, the contribution of each intermediate action $a^t$ is difficult to observe directly, since its effect may only appear after several subsequent steps.
Thus, defining step-wise rewards can introduce noisy or inconsistent supervision.
Following \cite{li2025flow}, we instead adopt a trajectory-level reward, where all actions in a rollout share a common signal determined by the correctness of the final solution, i.e., $r = R(a^t) = \bar{R}(o, q, y^*)$ ($\forall t = 1,\dots,T$), 
where $y^*$ is the ground-truth answer, and $\bar{R}(o,q,y^*) \in \{0,1\}$ evaluates whether the final prediction $o$ is correct. Similar to \cite{li2025flow}, we adopt the LLM-as-judge protocol for the reward function $\bar{R}(\cdot)$.
This approach ensures a global correctness signal throughout the reasoning trajectory, encouraging each intermediate decision $a^t$ to contribute to producing a correct final solution.

\noindent
\textbf{Challenges and Limitations of Prior Agentic Learning.}
Technically, multi-turn agentic learning must handle both long-horizon credit assignment and imbalanced data distributions. 
Given a trajectory $\tau=\{a^t\}_{t=1}^T$, the final output depends on the accumulated memory $\mathcal{M}^T$, so the effect of an early action may only appear after several planning, execution, and memory-update steps. 
Meanwhile, an imbalance in topics, tasks, and reasoning patterns propagates into rollouts, causing dominant trajectories or tool-use behaviors to receive disproportionate gradient updates.
While Eqn.~\eqref{eqn:rlhf_constrained} can be optimized with PPO \cite{schulman2017proximal}, PPO requires explicit reward modeling and value estimation over long trajectories, which can be biased and unstable under final-outcome supervision. 
GRPO \cite{li2025flow} mitigates value estimation via group-relative comparison, but still requires expensive on-policy sampling over multiple long rollouts involving tool calls, verification, and memory updates. 
These limitations motivate a Direct Preference Optimization approach that avoids explicit reward and value modeling while providing more stable learning under long-horizon dependency and imbalanced data.

\subsection{Agentic Learning via Preference Optimization}

The constrained optimization problem in Eqn.~\eqref{eqn:rlhf_constrained} can be rewritten using the Lagrangian \cite{rafailov2023direct} form as:
\begin{equation}
\label{eqn:rlhf_agentic_beta}
\footnotesize
\begin{split}
\pi_{\theta}^{\star} = \arg\!\max_{\pi_{\theta}} \; \mathbb{E}_{q \sim \mathcal{Q}} \mathbb{E}_{\tau \sim \pi_{\theta}(\tau \mid q)} \big[ R(\tau) \big] - \beta D_{\mathrm{KL}} \!\Big( \pi_{\theta}(\cdot \mid q) \,\Big\|\, \pi_{\mathrm{ref}}(\cdot \mid q) \Big), 
\end{split}
\end{equation}
where 
$\beta$ is the Lagrangian multiplier that controls the deviation of the current policy from the reference model. 
Then, the optimal policy of Eqn.~\eqref{eqn:rlhf_agentic_beta} \cite{rafailov2023direct} admits the following closed form:
\begin{equation}
\label{eqn:agentic_boltzmann}
\footnotesize
\begin{split}
\pi_{\theta}^{\star}(\tau \mid q) &= \frac{ \pi_{\mathrm{ref}}(\tau \mid q) \exp\!\left(\tfrac{1}{\beta}R(\tau)\right) }{ Z(q) }, \qquad
R(\tau) = \beta \log \frac{\pi_{\theta}^{\star}(\tau \mid q)} {\pi_{\mathrm{ref}}(\tau \mid q)} + \beta \log Z(q), 
\end{split}
\end{equation}
where $Z(q)=\sum_{\tau}\pi_{\mathrm{ref}}(\tau \mid q)\exp\!\left(\frac{1}{\beta}R(\tau)\right)$ is the partition function.

\noindent
\textbf{Pairwise Trajectory Preferences.}
For each query $q$, we denote by $\tau^{+}$ a preferred trajectory that yields a more effective reasoning process or a correct final solution, and by $\tau^{-}$ a dispreferred trajectory that produces inferior reasoning or an incorrect solution.
Then, the preference of $\tau^{+}$ over $\tau^{-}$ conditioned on query $q$ can be modeled using the Bradley-Terry model, i.e., $p(\tau^{+} \succ \tau^{-} \mid q) = \sigma\!\Big( \beta\big[ R(\tau^{+}) - R(\tau^{-}) \big] \Big)$, 
where $\sigma(u)$ is the sigmoid function. Accordingly, maximizing the conditional likelihood over pairwise trajectory preferences can be written as the following objective:
\begin{equation}
\label{eqn:agentic_pref_loss}
\footnotesize
\begin{split}
\mathcal{L}_{\mathrm{pref}}
=
\mathbb{E}_{(q,\tau^{+},\tau^{-})}
\Big[
-\log
\sigma\!\Big(
\beta\big[
R(\tau^{+}) - R(\tau^{-})
\big]
\Big)
\Big].
\end{split}
\end{equation}

\noindent
\textbf{Agentic Learning Paradigm with Trajectory-level DPO.} 
Under the optimality condition in Eqn.~\eqref{eqn:agentic_boltzmann}, the reward difference in Eqn.~\eqref{eqn:agentic_pref_loss} can be rewritten in terms of trajectory-level policy log-ratios. Therefore, the preference objective 
can be reformulated as the following \textbf{Trajectory-level DPO} loss:
\begin{equation}
\label{eqn:agentic_dpo_loss}
\footnotesize
\begin{split}
&\mathcal{L}_{\mathrm{DPO}}(\pi_{\theta}, \pi_{\mathrm{ref}})
=
-\mathbb{E}_{q,\tau^{+},\tau^{-}}
\Bigg[
\log \sigma \Bigg(
\beta
\log
\frac{\pi_{\theta}(\tau^{+}\mid q)}
{\pi_{\mathrm{ref}}(\tau^{+}\mid q)}
-
\beta
\log
\frac{\pi_{\theta}(\tau^{-}\mid q)}
{\pi_{\mathrm{ref}}(\tau^{-}\mid q)}
\Bigg)
\Bigg].
\end{split}
\end{equation}

Since each trajectory $\tau^+$ or $\tau^-$ is generated through a multi-turn reasoning process, its likelihood can be factorized over the sequence of intermediate actions as Eqn.~\eqref{eqn:joint_policy_rewrite}. 

\noindent
\textbf{Interpretation.}
The Trajectory-level DPO optimizes the planner to favor trajectories with better tool use and more accurate final solutions, while staying close to a reference policy for stability. 
Specifically, Eqn.~\eqref{eqn:agentic_dpo_loss} increases the relative log-likelihood of preferred trajectories $\tau^{+}$ over dispreferred ones $\tau^{-}$ compared to the reference policy. 
Unlike RLHF methods that require explicit reward modeling or value estimation, this formulation learns directly from pairwise trajectory preferences, making it suitable for multi-turn reasoning where step-wise rewards are difficult to define.

\noindent
\textbf{Limitations of Trajectory-level DPO.}
Although Trajectory-level DPO
enables stable optimization without explicit reward modeling, it still provides only coarse supervision for multi-turn reasoning. 
Since the preference signal is defined by the final outcome, all intermediate actions $\{a^t\}_{t=1}^T$ receive the same supervision regardless of their individual contributions. 
This makes it difficult to identify which reasoning steps cause success or failure, leading to inefficient policy improvement and high-variance optimization when trajectories share similar final outcomes but differ in intermediate quality. 

\subsubsection{Multi-level Preference Learning Approach to Agentic Learning}

To address the long-horizon problem aforementioned, we propose a new Multi-level Preference Learning (MPO) that decomposes trajectory-level preferences into multiple aligned state-level comparisons. 
Let a reasoning trajectory be denoted as 
$\tau=\{(s^t,a^t)\}_{t=1}^{T}$,
where each state
$s^t=(q,K,\mathcal{M}^t)$
consists of the query $q$, toolset $K$, and evolving memory $\mathcal{M}^t$ that summarizes the accumulated reasoning context up to step $t$. 
Since the preferred $\tau^+$ and dispreferred trajectories and $\tau^-$ may have different lengths and reasoning structures, we define an alignment set 
$
A \subseteq \{1,..,T^+\}\times\{1,..,T^-\}$ ($T^+$ and $T^-$ are the lenghts of $\tau^+$ and $\tau^-$) that pairs intermediate reasoning states according to their semantic similarity, enabling comparison between corresponding reasoning steps across trajectories.
For each aligned pair $(i,j)\in A$, we define the preference probability as follows:
\begin{equation}
\footnotesize
p_{ij} = \sigma\Big( \beta \big[ \log \pi_\theta(a_i^+|s_i^+) - \log \pi_\theta(a_j^-|s_j^-) \big] - \beta \big[ \log \pi_{\mathrm{ref}}(a_i^+|s_i^+) - \log \pi_{\mathrm{ref}}(a_j^-|s_j^-) \big] \Big).
\end{equation}
Then, our proposed \textbf{Multi-level Preference Optimization} objective can be defined as follows:
\begin{equation}
\label{eqn:flow_dpo}
\footnotesize
\mathcal{L}_{\mathrm{MPO}}(\pi_\theta,\pi_{\mathrm{ref}}) = - \mathbb{E}_{(\tau^+,\tau^-)} \frac{1}{|A|} \sum_{(i,j)\in A} \log p_{ij}.
\end{equation}
Unlike Trajectory-level DPO, 
our MPO objective introduces multiple intermediate preference constraints across aligned reasoning states. 
This formulation enables more effective credit assignment across multi-turn reasoning steps, allowing the policy to identify which intermediate decisions contribute positively or negatively to the final outcome. 
Consequently, MPO mitigates the long-horizon dependency in agentic learning and provides a more stable optimization signal across trajectories with varying reasoning structures and lengths.

\noindent
\textbf{Rollout Scoring for On-the-Fly Preference Construction.}
For each query $q$, we generate a small set of $K$ rollouts $\{\tau_k\}_{k=1}^{K}$ from the current policy $\pi_{\theta}$ and score each trajectory by both final correctness and reasoning efficiency. 
Let $R(\tau_k)$ denote the final reward, $U(\tau_k)$ the number of tool calls, and $T_k$ the trajectory length. 
We define $S(\tau_k) = R(\tau_k) - 0.05\frac{U(\tau_k)}{U_{\max}} - 0.05\frac{T_k}{T_{\max}}$, where $U_{\max}$ and $T_{\max}$ are computed within the current rollout group. 
The first term with a higher coefficient prioritizes trajectories that produce correct final solutions, while the second and third terms favor trajectories that solve the query using fewer tool invocations and fewer reasoning steps.
We then select the highest-scored rollout as $\tau^{+}$ and the lowest-scored rollout as $\tau^{-}$, enabling on-the-fly preference construction that encourages correct and efficient agentic reasoning.

\noindent
\textbf{Alignment Set $A$ Construction.}
Preferred and dispreferred trajectories may have different lengths and reasoning structures, so step-wise correspondence is not directly available. 
We therefore construct an alignment set
$
A \subseteq \{1,\dots,T^+\}\times\{1,\dots,T^-\}
$
that pairs semantically related states. 
Each state $s^t=(q,a^t,k^t,\mathcal{M}^t)$ includes the planner action, selected tool, and evolving memory.
For efficiency, we use monotonic dynamic programming alignment based on structured state similarity. 
For each preferred state $s_i^+$, we select:
\begin{equation}
\small
j^*(i)
=
\arg\max_{j \in \{1,\dots,T^-\}}
\mathrm{sim}(s_i^+,s_j^-),
\qquad
\text{s.t. } j^*(i+1)\ge j^*(i).
\end{equation}
The similarity is computed as $\mathrm{sim}(s_i^+,s_j^-) = \mathbf{1}(a_i^+=a_j^-) + \mathbf{1}(k_i^+=k_j^-) + \mathrm{sim}_m(\mathcal{M}_i^+,\mathcal{M}_j^-)$
where $\mathrm{sim}_m$ is the Jaccard overlap between tokenized memory states. 
The final alignment set is $A=\{(i,j^*(i))\}_{i=1}^{T^+}$, 
with monotonic filtering to retain at most $\min(T^+,T^-)$ aligned pairs. 
This construction preserves temporal consistency with low overhead, allowing MPO to propagate preference signals to intermediate reasoning states.

\subsubsection{Theoretical Analysis of Preference Learning in Agentic Learning}

Recent work shows that GRPO provides stable learning signals for multi-turn agentic reasoning through intra-group relative advantages \cite{li2025flow}.
Following prior interpretations of DPO and GRPO as contrastive objectives \cite{wu2025takes}, we show that K-GRPO can be decomposed into a positively weighted combination of pairwise preference gradients.
The, we prove that our preference learning induces an equivalent optimization direction as GRPO while avoiding explicit value estimation.
For simplicity, we present the equivalence at the trajectory level since our MPO preserves the same pairwise structure while extending supervision to intermediate reasoning states.
Let $\mathcal G(q) = \{\tau_1,\dots,\tau_K\}$ where $\tau_i \sim \pi_\theta(\cdot|q)$ be a group of $K \ge 2$ sampled trajectories.
Then, we define preferred $\mathcal P(q)$ and dispreferred $\mathcal N(q) $ trajectory sets, $m = |\mathcal P(q)|$, $n = |\mathcal N(q)|$, $m+n=K$, and assume non-degenerate groups ($1 \le m \le K-1$).
The trajectory-level score function can be defined as $s_\theta(q,\tau) = \nabla_\theta \log \pi_\theta(\tau|q)$.

\begin{theorem}[Pairwise Equivalence of DPO and K-GRPO up to Positive Weighting]
\label{thm:KGRPO_DPO_equivalence}

Under unclipped trajectory-level updates with binary rewards, the K-GRPO gradient can be written as
\begin{equation}
\footnotesize
g_\theta^{\mathrm{K\text{-}GRPO}}(q)
=
\alpha(q)
\;
\mathbb E_{\tau^+ \sim \mathrm{Unif}(\mathcal P(q)),
\;
\tau^- \sim \mathrm{Unif}(\mathcal N(q))}
\Big[
s_\theta(q,\tau^+) - s_\theta(q,\tau^-)
\Big].
\end{equation}
where the group-dependent positive weight is $\alpha(q) = \sqrt{\hat p_q(1-\hat p_q)} = \frac{\sqrt{mn}}{K}$, $\hat p_q = \frac{m}{K}$.
Moreover, the Trajectory-level DPO gradient can be written as $\nabla_\theta L_{\mathrm{DPO}} = -\mathbb E_{q,\tau^+,\tau^-} \Big[ w_\theta(q,\tau^+,\tau^-) \big( s_\theta(q,\tau^+) - s_\theta(q,\tau^-) \big) \Big]$, s
where $w_\theta(q,\tau^+,\tau^-) = \beta \sigma(-\Delta_\theta(q,\tau^+,\tau^-)) >0$ with $\Delta_\theta(q,\tau^+,\tau^-)$ is the DPO logit margin.
Therefore, K-GRPO and DPO share the same pairwise gradient basis $s_\theta(q,\tau^+) - s_\theta(q,\tau^-)$
and differ only by positive weighting (the proof is provided in the appendix).

\end{theorem}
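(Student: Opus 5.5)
The plan is to compute both gradients explicitly and read off the common pairwise basis. I will treat the two halves separately: first a direct computation of the K-GRPO gradient for a single non-degenerate group, then a one-line differentiation of the Trajectory-level DPO loss in Eqn.~\eqref{eqn:agentic_dpo_loss}.

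\textbf{K-GRPO gradient.} I fix the conventions implicit in the statement. The update is unclipped and on-policy, so the importance ratio $\pi_\theta/\pi_{\theta_{\mathrm{old}}}$ equals $1$ at the current iterate, and its gradient is exactly the score $s_\theta(q,\tau_i)$. The group gradient is then $g^{\mathrm{K\text{-}GRPO}}_\theta(q)=\frac1K\sum_{i=1}^K \hat A_i\, s_\theta(q,\tau_i)$. The advantage is the group-normalized $\hat A_i=(r_i-\bar r)/\mathrm{std}(r)$, with population standard deviation and no $\epsilon$.

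With binary rewards, $r_i=1$ exactly on $\mathcal P(q)$, so $\bar r=\hat p_q=m/K$ and $\mathrm{std}(r)=\sqrt{\hat p_q(1-\hat p_q)}=:\sigma_q$. This is positive by non-degeneracy $1\le m\le K-1$. Hence $\hat A_i=(1-\hat p_q)/\sigma_q$ on $\mathcal P(q)$ and $\hat A_i=-\hat p_q/\sigma_q$ on $\mathcal N(q)$. Splitting the sum and writing $\bar s_{\mathcal P}$, $\bar s_{\mathcal N}$ for the average scores over each set gives
\[
g_\theta^{\mathrm{K\text{-}GRPO}}(q)=\frac{1}{\sigma_q}\Big[\tfrac{m}{K}(1-\hat p_q)\,\bar s_{\mathcal P}-\tfrac{n}{K}\hat p_q\,\bar s_{\mathcal N}\Big].
\]
The key observation is that both coefficients equal $mn/K^2=\hat p_q(1-\hat p_q)$. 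Therefore $g=\sigma_q(\bar s_{\mathcal P}-\bar s_{\mathcal N})$. Since $\tau^+$ and $\tau^-$ are drawn independently and uniformly, $\bar s_{\mathcal P}-\bar s_{\mathcal N}=\mathbb E[s_\theta(q,\tau^+)-s_\theta(q,\tau^-)]$ by linearity. This yields $\alpha(q)=\sigma_q=\sqrt{mn}/K>0$.

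\textbf{DPO gradient.} Let $\Delta_\theta=\beta\log\frac{\pi_\theta(\tau^+|q)}{\pi_{\mathrm{ref}}(\tau^+|q)}-\beta\log\frac{\pi_\theta(\tau^-|q)}{\pi_{\mathrm{ref}}(\tau^-|q)}$. Using $\frac{d}{du}\log\sigma(u)=\sigma(-u)$ and the fact that $\pi_{\mathrm{ref}}$ does not depend on $\theta$, we get $\nabla_\theta\Delta_\theta=\beta(s_\theta(q,\tau^+)-s_\theta(q,\tau^-))$. Exchanging gradient and expectation, which is valid for a fixed preference dataset, gives the stated form with $w_\theta=\beta\sigma(-\Delta_\theta)>0$. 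By the factorization in Eqn.~\eqref{eqn:joint_policy_rewrite}, only the planner factors depend on $\theta$. So $s_\theta$ reduces to $\sum_t\nabla_\theta\log\pi_\theta(a^t|s^t)$ in both methods, and the two share the same basis. The conclusion then follows by comparing the two displays.

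\textbf{Main obstacle.} The delicate step is not the algebra but pinning down the GRPO conventions under which the identity is exact. These are the $1/K$ group averaging, the population rather than sample standard deviation, no stabilizing $\epsilon$, no per-token length normalization, and the on-policy ratio equal to $1$ at the current iterate. Under other choices, for example the $1/(K-1)$ standard deviation, the result still holds but $\alpha(q)$ changes by a positive group-dependent factor. I would state this explicitly rather than absorb it silently.
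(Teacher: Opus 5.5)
Your proposal is correct and follows essentially the same route as the paper's proof. You compute the binary-reward group-normalized advantages (your $(1-\hat p_q)/\sigma_q$ and $-\hat p_q/\sigma_q$ are exactly the paper's $A^+$ and $A^-$), split the group sum so that both coefficients collapse to $\sqrt{mn}/K$, rewrite the difference of group means as an expectation over independent uniform pairs, and differentiate the log-sigmoid to get $w_\theta=\beta\sigma(-\Delta_\theta)$; along the way you also state the GRPO conventions required for the identity to be exact (population standard deviation, no $\epsilon$, no length normalization, on-policy ratio equal to one), which the paper leaves implicit.
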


\noindent
\textbf{Interpretation.}
Theorem~\ref{thm:KGRPO_DPO_equivalence} shows that GRPO implicitly performs pairwise preference learning over preferred-dispreferred trajectory pairs, while DPO makes this structure explicit through adaptive logistic weighting and reference-policy regularization.
Thus, DPO preserves the same optimization direction as GRPO but avoids explicit value estimation and repeated on-policy sampling, yielding a more efficient framework for long-horizon agentic reasoning.

\subsection{Imbalanced Learning in Agentic Reasoning}
\label{subsec:focal_flow_dpo}

In multi-turn agentic reasoning, data imbalance propagates through rollout generation and further induces imbalance in the aligned state pairs $A$. 
Since $\pi_\theta$ frequently generates common reasoning patterns, e.g., repeated tool selections or standard intermediate steps, sampled rollouts contain many similar trajectory segments. 
As a result, alignment construction produces many \emph{easy} pairs where the preferred action already has high probability, while rare but critical transitions, e.g., correcting an incorrect tool call or revising evidence, yield only a few \emph{hard} pairs requiring meaningful policy updates. 
Thus, $A$ becomes dominated by easy-majority pairs, biasing gradients toward frequent reasoning structures and under-optimizing rare but important decisions.
Formally, under Eqn.~\eqref{eqn:flow_dpo}, the gradient magnitude of each aligned pair is proportional to $\psi_0(p) = \left| \frac{\partial (-\log p_{i,j})}{\partial z} \right| = 1-p_{i,j}$,  where $z$ is the logit margin. 
Although easy pairs with $p_{i,j}\approx 1$ have small individual gradients, their large quantity can still dominate the aggregate update. 
This leads to biased optimization and slow improvement on underrepresented hard decisions.

\noindent
\textbf{Fair Multi-level Preference Optimization.}
To mitigate the bias caused imbalance data, 
inspired by \cite{lin2017focal, truong2026phi}, we introduce a focal weighting mechanism that down-weights easy preference pairs and emphasizes hard pairs.
Formally, we formulate our \textbf{$\Phi$-MPO} objective as follows:
\begin{equation}
\label{eq:focal_flow_dpo}
\small
\mathcal L_{\mathrm{FairMPO}} = - \mathbb{E}_{(\tau^+,\tau^-)}  \frac{1}{|A|} \sum_{(i,j)\in A} (1-p_{i,j})^\gamma \log p_{i,j}, 
\end{equation}
where $\gamma \ge 0$ is the focal parameter.
Compared to Eqn.~\eqref{eqn:flow_dpo}, the factor $(1-p_{i,j})^\gamma$ suppresses the contribution of easy pairs ($p_{i,j}\approx 1$) and increases the importance of hard pairs with smaller $p_{i,j}$.

\begin{theorem}[Suppression of Easy-Majority Dominance in $\Phi$-MPO]\label{thm:focal-flow-dpo}
Let
us define $\mathcal{L}_\gamma(p)=-(1-p)^\gamma \log p$, $p=\sigma(z)$, $\gamma\ge 0$. Then, the margin-gradient magnitude can be formed as $\psi_\gamma(p):=\left|\frac{\partial \mathcal{L}_\gamma(p)}{\partial z}\right|$, 
Suppose that, at level $\ell$, the aligned pairs are partitioned into an easy-majority set $E^{(\ell)}$ and a hard-minority set $H^{(\ell)}$ such that $p_{ij}^{(\ell)} \ge 1-\varepsilon \quad \forall (i,j)\in E^{(\ell)}$ and $p_{ij}^{(\ell)} \le 1-\delta \quad \forall (i,j)\in H^{(\ell)}$
for some $0<\varepsilon<\delta<1$.
Then
\begin{equation}
\footnotesize
\begin{split}
\sum_{(i,j)\in E^{(\ell)}} \psi_\gamma\!\left(p_{ij}^{(\ell)}\right)
\le
|E^{(\ell)}|(\gamma+1)\varepsilon^{\gamma+1}
 \quad &\text{and} \quad
\sum_{(i,j)\in H^{(\ell)}} \psi_\gamma\!\left(p_{ij}^{(\ell)}\right)
\ge
|H^{(\ell)}|\delta^{\gamma+1}
\\
\Rightarrow
\frac{
\sum_{(i,j)\in E^{(\ell)}} \psi_\gamma\!\left(p_{ij}^{(\ell)}\right)
}{
\sum_{(i,j)\in H^{(\ell)}} \psi_\gamma\!\left(p_{ij}^{(\ell)}\right)
}
&\le
\frac{|E^{(\ell)}|}{|H^{(\ell)}|}
(\gamma+1)\left(\frac{\varepsilon}{\delta}\right)^{\gamma+1}.
\end{split}
\end{equation}
Therefore, $\Phi$-MPO suppresses the optimization influence of overrepresented easy pairs and mitigates imbalance caused by easy-majority dominance (the proof is provided in the appendix).
\end{theorem}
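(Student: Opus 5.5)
The plan is to compute $\psi_\gamma$ in closed form, sandwich it between two multiples of $(1-p)^{\gamma+1}$, and then sum over the easy and hard sets.

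\textbf{Step 1: closed form.} Write $\mathcal{L}_\gamma(p)=-(1-p)^\gamma\log p$ and use $\frac{dp}{dz}=p(1-p)$ for $p=\sigma(z)$. By the chain rule,
\begin{equation*}
\frac{\partial \mathcal{L}_\gamma}{\partial z}
= p(1-p)\Big[\gamma(1-p)^{\gamma-1}\log p-\frac{(1-p)^\gamma}{p}\Big]
= \gamma\, p(1-p)^\gamma\log p-(1-p)^{\gamma+1}.
\end{equation*}
For $p\in(0,1)$ both terms are nonpositive, since $\log p\le 0$. The absolute value therefore involves no cancellation:
\begin{equation*}
\psi_\gamma(p)=(1-p)^{\gamma+1}+\gamma\, p(1-p)^\gamma(-\log p).
\end{equation*}
For $\gamma=0$ this reduces to $\psi_0(p)=1-p$, consistent with the earlier discussion. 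When $\gamma<1$ the factor $(1-p)^{\gamma-1}$ is singular at $p=1$, but after multiplying by $p(1-p)$ the final expression is smooth, so nothing breaks.

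\textbf{Step 2: two-sided pointwise bound.} This is the step I expect to carry the main content, namely controlling the logarithmic term.
\begin{itemize}
\item For the lower bound, drop the nonnegative second term to get $\psi_\gamma(p)\ge(1-p)^{\gamma+1}$.
\item For the upper bound, use the elementary inequality $-\log p=\log(1/p)\le \frac{1}{p}-1=\frac{1-p}{p}$, which follows from $\log x\le x-1$. Then $\gamma p(1-p)^\gamma(-\log p)\le\gamma(1-p)^{\gamma+1}$.
\end{itemize}
Together these give
\begin{equation*}
(1-p)^{\gamma+1}\le\psi_\gamma(p)\le(\gamma+1)(1-p)^{\gamma+1}.
\end{equation*}

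\textbf{Step 3: summing and taking the ratio.} On the easy set $E^{(\ell)}$ we have $1-p_{ij}^{(\ell)}\le\varepsilon$. Since $t\mapsto t^{\gamma+1}$ is monotone on $[0,1]$, each term is at most $(\gamma+1)\varepsilon^{\gamma+1}$. Summing gives $|E^{(\ell)}|(\gamma+1)\varepsilon^{\gamma+1}$.

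On the hard set $H^{(\ell)}$ we have $1-p_{ij}^{(\ell)}\ge\delta$, so each term is at least $\delta^{\gamma+1}$. Summing gives $|H^{(\ell)}|\delta^{\gamma+1}$.

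Assuming $H^{(\ell)}\neq\emptyset$, the hard-set sum is strictly positive. Dividing the easy-set upper bound by the hard-set lower bound yields the stated ratio bound $\frac{|E^{(\ell)}|}{|H^{(\ell)}|}(\gamma+1)(\varepsilon/\delta)^{\gamma+1}$.

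\textbf{Interpretation.} Since $\varepsilon<\delta$, the factor $(\varepsilon/\delta)^{\gamma+1}$ decays geometrically in $\gamma$, while the prefactor $(\gamma+1)$ grows only linearly. The bound therefore shrinks as $\gamma$ increases, which is the suppression of easy-majority dominance claimed in the theorem.
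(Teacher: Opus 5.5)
Your proposal is correct and follows the paper's proof essentially line for line. It uses the same closed form $\psi_\gamma(p)=(1-p)^\gamma\big((1-p)-\gamma p\log p\big)$, the same sandwich $(1-p)^{\gamma+1}\le\psi_\gamma(p)\le(\gamma+1)(1-p)^{\gamma+1}$ via $-\log p\le (1-p)/p$, and the same summation and ratio step; your explicit assumption $H^{(\ell)}\neq\emptyset$ is a small gain in rigor that the paper leaves implicit.

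One minor caveat concerns your closing interpretation. The bound $(\gamma+1)(\varepsilon/\delta)^{\gamma+1}$ tends to $0$, but it is only eventually decreasing in $\gamma$: it is monotone from $\gamma=0$ only when $\varepsilon/\delta\le 1/e$, so "shrinks as $\gamma$ increases" should be qualified.
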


\paragraph{Interpretation.}
When $\gamma=0$, $\Phi$-MPO recovers standard MPO, whose easy-majority versus hard-minority gradient contribution scales with the frequency ratio $|E^{(\ell)}|/|H^{(\ell)}|$.
As $\gamma$ increases, easy-pair contributions decay at the higher-order rate $\varepsilon^{\gamma+1}$, while hard pairs retain meaningful gradients.
Thus, $\Phi$-MPO prevents frequent easy decisions from dominating optimization, yielding fairer updates and better adaptation to rare but informative reasoning transitions.

\begin{figure}[!b]
    \vspace{-4mm}
    \centering
    \includegraphics[width=1.0\linewidth]{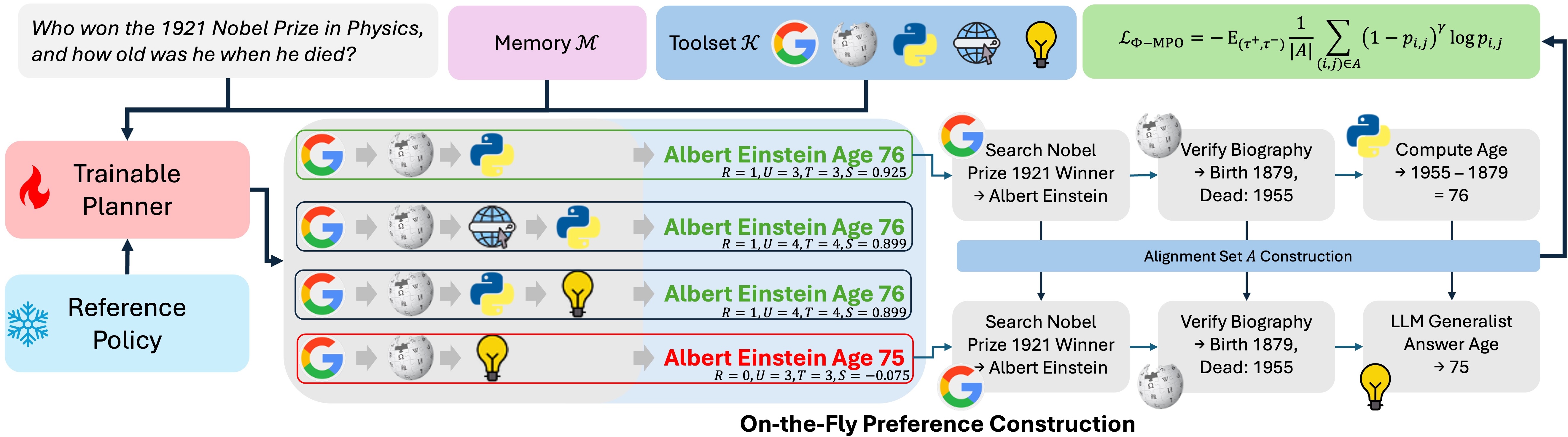}
    \vspace{-5mm}
    \caption{\textbf{Our Proposed $\Phi$-MPO Framework}.}
    \label{fig:framework}
\end{figure}

\textbf{Final Agentic Learning Objective.}
In our framework, we combine trajectory-level and MPO to improve both global task success and intermediate reasoning quality. 
Trajectory-level DPO encourages trajectories with correct final outcomes, while MPO provides denser supervision over aligned reasoning states, improving tool use and evidence accumulation:
\begin{equation}
\footnotesize
\mathcal L_{\mathrm{Final}}
=
\lambda_{\mathrm{DPO}}\mathcal{L}_{\mathrm{DPO}}
+
\lambda_{\mathrm{\Phi-MPO}}\mathcal{L}_{\mathrm{\Phi-MPO}},
\end{equation}
where $\lambda_{\mathrm{DPO}}$ and $\lambda_{\mathrm{\Phi-MPO}}$ balance the two losses. 
We also apply the focal weighting in Eqn.~\eqref{eq:focal_flow_dpo} to $\mathcal{L}_{\mathrm{DPO}}$ to reduce easy-majority dominance. 
Figure~\ref{fig:framework} illustrates the overall framework.

\section{Experiments}

\subsection{Implementation and Benchmarks}

\textbf{Implementation.} 
We adopt the implementation of \cite{li2025flow} and instantiate the \textit{Action Planner}, \textit{Tool Executor}, \textit{Execution Verifier}, and \textit{Solution Generator} with Qwen2.5-7B-Instruct \cite{qwen2.5}.
The agent uses five tools: \textit{Base Generator}, \textit{Python Coder}, \textit{Google Search}, \textit{Wikipedia Search}, and \textit{Web Search}. 
Following standard protocols \cite{li2025flow, jin2025search}, we train with learning rate $10^{-6}$, penalty $\beta=0.001$, planner temperature $0.5$, maximum planner length $2048$, batch size $12$, and $K=4$ rollouts per input. 
We set the maximum number of turns to $3$ during training and $10$ during evaluation, use GPT-4o as the LLM judge for the reward function, and run all tool LLMs with temperature $0.0$ for deterministic execution. 
Training is conducted on $12$ NVIDIA L40S GPUs. 
We set $\gamma=2.0$ and $\lambda_{\mathrm{DPO}}=\lambda_{\mathrm{\Phi-MPO}}=1.0$. 
Detailed tool definitions and memory updates are provided in the Appendix.

\textbf{Datasets and Benchmarks.}
Following prior protocols \cite{jin2025search, li2025flow}, our framework is trained on a mixture of Search-R1 \cite{jin2025search} and DeepMath \cite{he2025deepmath}.
For fair comparisons, we conduct our evaluation on four benchmarks:
(1) \textit{\textbf{Knowledge-Intensive Search}} includes Bamboogle \cite{press2023measuring} , 2Wiki \cite{ho2020constructing}, HotpotQA \cite{yang2018hotpotqa}, and Musique \cite{trivedi2022musique};
(2) \textit{\textbf{Agentic Reasoning}} consists of GAIA \cite{mialon2023gaia};
(3) \textit{\textbf{Mathematical Reasoning}} includes AIME2024 \cite{aops_aime}, AMC23 \cite{AMC23}, and GameOf24 \cite{lightman2023let}.
and (4) \textit{\textbf{Scientific Reasoning}} includes GPQA \cite{rein2024gpqa} and MedQA \cite{yang2025llm}.

\subsection{Main Results}

\begin{table}[t]
    \centering
    \begin{minipage}{0.49\linewidth}
    \centering
    \setlength{\tabcolsep}{1pt}
    \caption{\textbf{Accuracy Results on Search \& Agentic Reasoning Benchmarks.}}
    \label{tab:search-agentic}
    \resizebox{1.0\linewidth}{!}{
    \begin{tabular}{ll|cccc|c}
    \hline
    \multirow{2}{*}{Method} & \multirow{2}{*}{Model Size} & \multicolumn{4}{c|}{Knowledge-Intensive Search}   & Agentic \\
                           &                       & Bamboogle & 2Wiki & HotpotQA & Musique & GAIA    \\
    \hline
    Qwen2.5-7B-Inst    &   7B             & 12.0     & 23.0 & 21.0    & 6.0    & 3.2    \\
    Qwen2.5-14B-Inst  &   14B            & 21.6     & 26.7 & 20.0    & 8.0    & 5.5    \\
    Qwen2.5-32B-Inst  &   32B            & 24.0     & 26.7 & 27.0    & 6.0    & 9.5    \\
    Llama-3.3-70B-Inst &   70B            & 18.4     & 22.7 & 52.0    & 16.0   & 3.2    \\
    \hline
    GPT-4o-mini \cite{hurst2024gpt}            & 8B                    & 40.8     & 35.6 & 41.0    & 15.0   & 7.1    \\
    GPT-4o \cite{hurst2024gpt}                & 200B                  & 68.8     & 49.5 & 54.0    & 24.0   & 17.3   \\
    \hline
    SFT & 7B-Inst               & 12.0     & 25.9 & 22.0    & 6.6    & 3.2    \\
    Iter-RetGen \cite{shao2023enhancing}            & 7B-Inst               & 36.8     & 33.6 & 37.4    & 17.8   & 3.9    \\
    Search-R1 \cite{jin2025search}             & 7B-Inst               & 43.2     & 38.2 & 37.0    & 14.6   & 19.1   \\
    ZeroSearch \cite{sun2025zerosearch}            & 7B-Base               & 27.8     & 35.2 & 34.6    & 18.0   & 16.5   \\
    ReSearch \cite{chen2025learning}               & 7B-Base               & 42.4     & 47.6 & 43.5    & 22.3   & 17.3   \\
    StepSearch \cite{wang2025stepsearch}             & 7B-Base               & 40.0     & 36.6 & 38.6    & 22.6   & -       \\
    VerlTool \cite{jiang2025verltool}              & 7B-Base               & 46.4     & 45.3 & 44.8    & 19.3   & 11.2   \\
    AutoGen \cite{wu2024autogen}               & 7B-Inst               & 59.6     & 44.0 & 50.0    & 15.9   & 6.3    \\
    AgentFlow \cite{li2025flow}             & 7B-Inst               & 58.4     & 60.0 & 51.3    & 19.2   & 17.2   \\
    FlowGRPO \cite{li2025flow}   & 7B-Inst               & 69.6     & 77.2 & 57.0    & 25.3   & 33.1   \\
    \hline
    $\Phi$-MPO                   & 7B-Inst               &  \textbf{81.6} & \textbf{81.5} & \textbf{69.0} & \textbf{35.0} & \textbf{41.7} \\
    $\Phi$-MPO                  &   Qwen3.5-9B  &    \textbf{85.6}	&  \textbf{85.0}	&  \textbf{77.0}	&  \textbf{40.0}	& \textbf{49.6} \\
    
    \hline
    \end{tabular}
    }
    \end{minipage}
    \hfill
    \begin{minipage}{0.49\linewidth}
    \centering
    \setlength{\tabcolsep}{1pt}
    \caption{\textbf{Accuracy Results on Math \& Scientific Reasoning Benchmarks.} (7B-Inst: Qwen2.5-7B-Instruct, 7B-Base: Qwen2.5-7B-Base)}
    \label{tab:math-science}
    \resizebox{1.0\linewidth}{!}{
    \begin{tabular}{ll|ccc|cc}
    \hline
    \multirow{2}{*}{Method}  & \multirow{2}{*}{Model Size} & \multicolumn{3}{c|}{Mathematical Reasoning} & \multicolumn{2}{c}{Scientific Reasoning} \\
                            &                       & AIME24       & AMC23       & GameOf24      & GPQA               & MedQA               \\
    \hline
    Qwen2.5-7B-Inst    & 7B               & 6.7          & 47.5        & 33.0          & 34.0               & 66.0                \\
    Qwen2.5-14B-Inst   & 14B              & 6.7          & 60.0        & 25.0          & 31.0               & 75.0                \\
    Llama-3.3-70B-Inst  & 70B              & 6.7          & 47.5        & 31.0          & 35.0               & 67.0                \\
    Llama-3.1-405B-Inst & 405B             & 26.7         & 47.5        & 23.0          & 30.0               & 62.0                \\
    \hline
    GPT-4o-mini \cite{hurst2024gpt}             & 8B              & 13.3         & 57.5        & 16.0          & 27.0               & 66.0                \\
    GPT-4o \cite{hurst2024gpt}                  & 200B            & 13.3         & 60.0        & 32.0          & 31.0               & 60.0                \\
    \hline
    SFT  & 7B-Inst               & 6.7          & 47.5        & 33.0          & 34.0               & 66.0                \\
    SimpleRL \cite{zeng2025simplerl}         & 7B-Base               & 16.7         & 60.0        & 33.0          & 45.0               & 65.0                \\
    Open-Reasoner \cite{hu2025open}      & 7B-Base               & 16.7         & 54.9        & 32.0          & 34.0               & 54.0                \\
    General-Reasoner \cite{ma2025general}       & 7B-Base               & 13.3         & 55.0        & 33.0          & 35.5               & 61.0                \\
    Luffy \cite{yan2025learning}                   & 7B-Inst               & 30.7         & 44.8        & 33.0          & 34.0               & 77.0                \\
    TIR \cite{yang2024qwen2}                    & 7B-Inst               & 10.0         & 50.0        & 33.0          & 42.0               & 76.8                \\
    ToRL  \cite{li2025torl}                  & 7B-Inst               & 20.0         & 60.0        & 31.0          & 35.0               & 76.5                \\
    AutoGen \cite{wu2024autogen}                 & 7B-Inst               & 13.3         & 57.5        & 24.0          & 42.0               & 72.0                \\
    AgentFlow \cite{li2025flow}              & 7B-Inst               & 16.7         & 47.4        & 31.0          & 37.0               & 76.0                \\
    Flow-GRPO \cite{li2025flow}   & 7B-Inst               & 40.0         & 61.5        & 53.0          & 47.0               & 80.0           \\
    \hline
    $\Phi$-MPO                   & 7B-Inst               & \textbf{50.0} & \textbf{72.5} & \textbf{65.0} & \textbf{59.0} & \textbf{86.0} \\
    $\Phi$-MPO                  &   Qwen3.5-9B  &   \textbf{73.3}	& \textbf{80.0}	& \textbf{75.0}	& \textbf{64.0}	& \textbf{90.0} \\
    \hline
    \end{tabular}
    }
    \end{minipage}
    \vspace{-4mm}
\end{table}

\textbf{Search-Intensive and Agentic Benchmarks.}
Table~\ref{tab:search-agentic} shows that our method consistently achieves the best performance across all knowledge-intensive search benchmarks and the agentic reasoning benchmark GAIA. 
With the Qwen2.5-7B-Instruct backbone, our method attains 81.6\% on Bamboogle, 81.5\% on 2Wiki, 69.0\% on HotpotQA, 35.0\% on Musique, and 41.7\% on GAIA, consistently outperforming prior search-based, agentic, and reinforcement learning baselines, including FlowGRPO \cite{li2025flow}. 
Notably, our 7B model even surpasses GPT-4o across all five benchmarks, highlighting the effectiveness of our proposed framework for both knowledge-intensive retrieval and multi-turn agentic reasoning. 
When using the stronger Qwen3.5-9B \cite{qwen3.5} backbone, the performance is further improved to 85.6\%, 85.0\%, 77.0\%, 40.0\%, and 49.6\%, respectively, which further confirms the robustness and scalability of our approach.
Figure \ref{fig:result} shows a case study of our approach.

\begin{wrapfigure}[15]{r}{0.5\linewidth}
    \centering
    \vspace{-6mm}
    \includegraphics[width=1.0\linewidth]{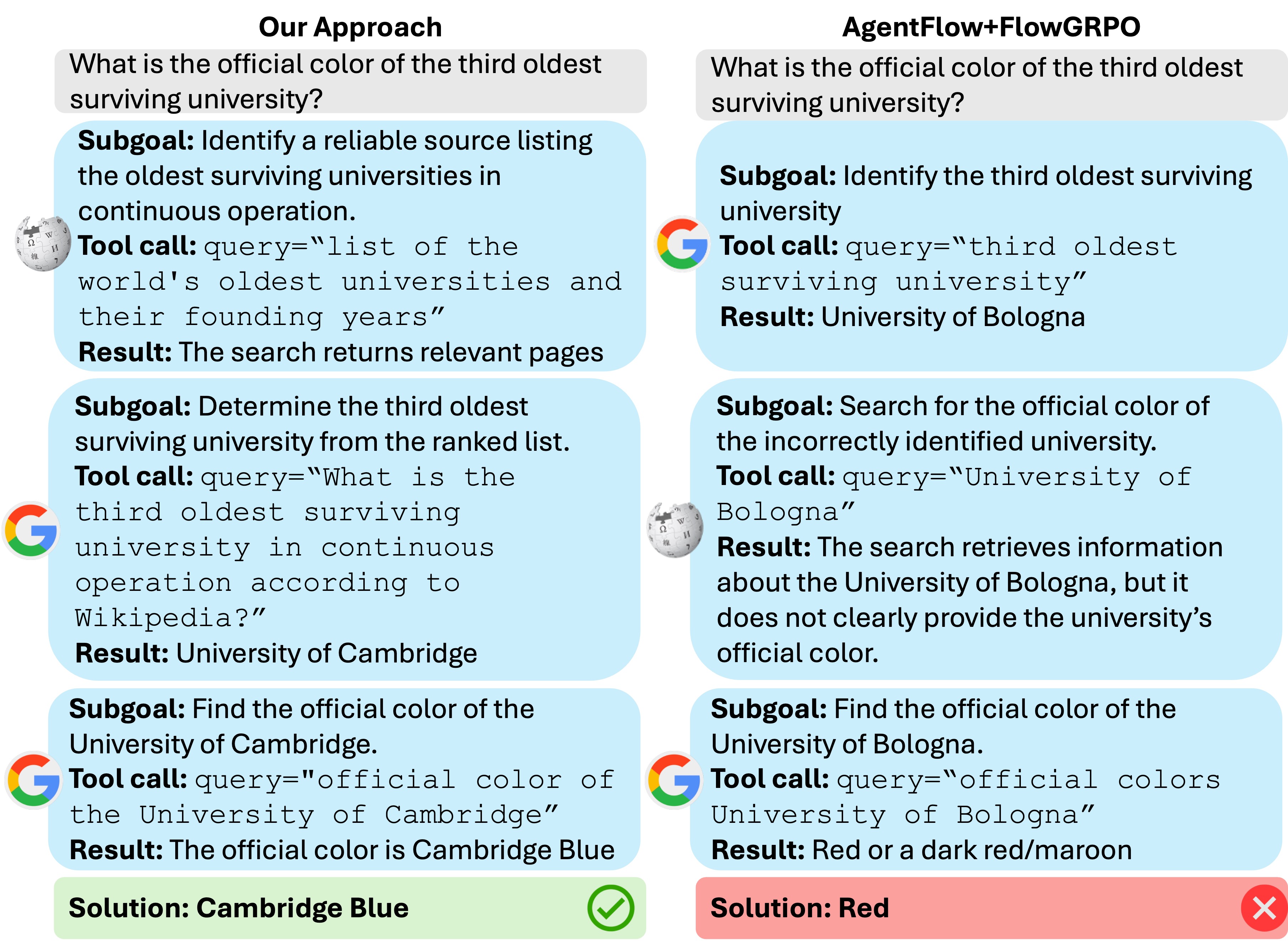}
    \vspace{-5mm}
    \caption{Our Case Study Example.
    }
    \label{fig:result}
\end{wrapfigure}
\textbf{Mathematical and Scientific Reasoning Benchmarks.}
The results in Table~\ref{tab:math-science} further demonstrate the strong effectiveness of our approach on both mathematical and scientific reasoning tasks.
Using the Qwen2.5-7B-Instruct backbone, $\Phi$-MPO achieves 50.0\% on AIME24, 72.5\% on AMC23, 65.0\% on GameOf24, 59.0\% on GPQA, and 86.0\% on MedQA, establishing the best performance across all benchmarks.
Compared prior Flow-GRPO \cite{li2025flow}, our approach obtains consistent gains on every task, with especially notable improvements on AIME24, GameOf24, and GPQA.
These results suggest that our approach is particularly effective for challenging multi-step reasoning problems that require both precise planning and reliable tool use.
When scaling to Qwen3.5-9B \cite{qwen3.5}, the performance is further improved,
which again confirms our robustness and scalability.

\subsection{Ablation Studies}

\textbf{Effectiveness of Different Training Approaches.} 
Table~\ref{tab:abl-multi-level-loss} compares Frozen, (Supervised Finetuning) SFT, Flow-GRPO, and our method. 
The Frozen baseline performs competitively on Bamboogle (58.4\%) and 2Wiki (60.0\%), but is limited on GAIA (17.2\%) and AIME24 (16.7\%). 
SFT performs worse across all benchmarks, suggesting that standard supervised tuning is insufficient for multi-turn agentic reasoning. 
Flow-GRPO improves performance to 69.6\%, 77.2\%, 33.1\%, and 40.0\%, respectively, confirming the benefit of reinforcement learning for long-horizon optimization. 
Our method further outperforms Flow-GRPO on all benchmarks, demonstrating the effectiveness of preference optimization for agentic learning.

\begin{wraptable}[7]{r}{0.5\linewidth}
\centering
\vspace{-4mm}
\caption{Effectiveness of Our Proposed Objective.}
\label{tab:abl-multi-level-loss}
\setlength{\tabcolsep}{2pt}
\vspace{-2mm}
\resizebox{1.0\linewidth}{!}{
\begin{tabular}{ccc|cccc}
\hline
DPO         & Multi-Level        & Fair        & Bamboogle & 2Wiki & GAIA & AIME24 \\
\hline
\cmark           &            &              & 61.6\% & 61.5\% & 21.3\% & 30.0\% \\ 
\cmark           & \cmark     &              &  76.0\% & 76.0\% & 36.2\% & 43.3\% \\
\cmark           & \cmark     & \cmark       &      \textbf{81.6\%} & \textbf{81.5\%} & \textbf{41.7\%} & \textbf{50.0\%} \\

\hline
\multicolumn{3}{l|}{Frozen}                   & 58.4\%      & 60.0\%  & 17.2\% & 16.7\%   \\
\multicolumn{3}{l|}{Supervised   Fine-Tuning} & 30.4\%      & 32.7\%  & 6.3\%  & 3.3\%    \\
\multicolumn{3}{l|}{Flow-GRPO}                & 69.6\%      & 77.2\%  & 33.1\% & 40.0\%  \\
\hline
\end{tabular}
}
\end{wraptable}

\textbf{Effectiveness of Fair MPO.}
As shown in Table~\ref{tab:abl-multi-level-loss}, Trajactory-level DPO achieves 61.6\% on Bamboogle, 61.5\% on 2Wiki, 21.3\% on GAIA, and 30.0\% on AIME24, outperforming Frozen and SFT baselines on most tasks. 
Adding MPO improves performance to 76.0\%, 76.0\%, 36.2\%, and 43.3\%, showing the benefit of intermediate flow supervision. 
Our final $\Phi$-MPO achieves the best results, reaching 81.6\%, 81.5\%, 41.7\%, and 50.0\%, respectively. 
These results confirm that multi-level supervision and focal balancing both improve agentic preference learning under imbalanced data.

\begin{wraptable}[6]{r}{0.45\linewidth}
\centering
\vspace{-4mm}
\caption{Effectiveness of Number of Turns.}
\label{tab:abl-turns}
\vspace{-2mm}
\resizebox{1.0\linewidth}{!}{
\begin{tabular}{c|cccc}
\hline
$T$ & Bamboogle & 2Wiki & GAIA & AIME24 \\
\hline
3 & 60.8\% & 61.0\% & 21.3\% & 26.7\% \\
5 & 69.6\% & 70.0\% & 29.9\% & 36.7\% \\
7 & 76.8\% & 77.0\% & 37.0\% & 43.3\% \\
10   &      \textbf{81.6\%} & \textbf{81.5\%} & \textbf{41.7\%} & \textbf{50.0\%} \\

\hline
\end{tabular}
}
\end{wraptable}

\textbf{Effectiveness of Maximum Number of Turns.}
As shown in Table~\ref{tab:abl-turns}, increasing the turn budget consistently improves performance across all benchmarks. 
With $T=3$, the model achieves only 60.8\% on Bamboogle, 61.0\% on 2Wiki, 21.3\% on GAIA, and 26.7\% on AIME24. 
Performance steadily improves as $T$ increases to $5$ and $7$, with the best results at $T=10$: 81.6\%, 81.5\%, 41.7\%, and 50.0\%, respectively. 
These results show that more interaction steps enable deeper planning, tool use, and recovery from early mistakes in long-horizon reasoning.

\begin{wraptable}[7]{r}{0.45\linewidth}
\centering
\vspace{-4mm}
\caption{Effectiveness of Hyper-Parameter $\gamma$.}
\label{tab:abl-gamma}
\vspace{-2mm}
\resizebox{1.0\linewidth}{!}{
\begin{tabular}{c|cccc}
\hline
$\gamma$ & Bamboogle & 2Wiki & GAIA & AIME24 \\
\hline
0.0 & 76.0\%          & 76.0\%          & 36.2\%          & 43.3\%          \\
0.5 & 76.0\%          & 76.5\%          & 37.0\%          & 46.7\%          \\
1.0 & 78.4\%          & 78.0\%          & 37.8\%          & 46.7\%          \\
2.0 & \textbf{81.6\%} & \textbf{81.5\%} & \textbf{41.7\%} & \textbf{50.0\%} \\
5.0 & 75.2\%          & 75.5\%          & 35.4\%          & 43.3\%         \\    
\hline
\end{tabular}
}
\end{wraptable}

\textbf{Effectiveness of Focal Parameter $\gamma$.}
Table~\ref{tab:abl-gamma} studies the focal hyper-parameter $\gamma$, which controls the emphasis on hard reasoning decisions. 
When $\gamma=0.0$, the objective reduces to standard MPO, achieving 76.0\% on Bamboogle, 76.0\% on 2Wiki, 36.2\% on GAIA, and 43.3\% on AIME24. 
Increasing $\gamma$ from $0.5$ to $2.0$ consistently improves performance, with $\gamma=2.0$ achieving the best results: 81.6\%, 81.5\%, 41.7\%, and 50.0\%, respectively. 
However, $\gamma=5.0$ degrades performance, suggesting that overly strong focal weighting suppresses too many gradients and weakens learning. 
These results show that a moderate focal factor best balances easy-majority patterns and informative hard decisions.

\begin{wraptable}[8]{r}{0.5\linewidth}
\centering
\vspace{-4mm}
\caption{Effectiveness of Model Size.}
\label{tab:abl-model-size}
\setlength{\tabcolsep}{2pt}
\vspace{-2mm}
\resizebox{1.0\linewidth}{!}{
\begin{tabular}{ll|cccc}
\hline
Method                     & Size & Bamboogle & 2Wiki & GAIA & AIME24 \\
\hline
\multirow{2}{*}{Frozen}    & Qwen2.5-3B-Instruct   & 53.6\%      & 63.0\%  & 14.3\% & 13.3\%   \\
                           & Qwen2.5-7B-Instruct   & 58.4\%      & 60.0\%  & 17.2\% & 16.7\%   \\
\hline
\multirow{2}{*}{AgentFlow} & Qwen2.5-3B-Instruct   & 68.8\%      & 72.3\%  & 29.1\% & 20.0\%   \\
                           & Qwen2.5-7B-Instruct   & 69.6\%      & 77.2\%  & 33.1\% & 40.0\%   \\
\hline
\multirow{2}{*}{Ours}      & Qwen2.5-3B-Instruct   &     72.0\% & 72.5\% & 32.3\% & 40.0\% \\
                           & Qwen2.5-7B-Instruct   &     \textbf{81.6\%} & \textbf{81.5\%} & \textbf{41.7\%} & \textbf{50.0\%} \\
\hline
\multirow{2}{*}{Ours}      & Qwen3.5-4B   & 74.4\%          & 74.0\%          & 34.6\%          & 33.3\%          \\
                           & Qwen3.5-9B   & \textbf{85.6\%} & \textbf{85.0\%} & \textbf{49.6\%} & \textbf{73.3\%} \\

\hline
\end{tabular}
}
\end{wraptable}

\textbf{Effectiveness of Different Backbone.}
As in Table~\ref{tab:abl-model-size} , across Qwen2.5-3B-Instruct and Qwen2.5-7B-Instruct, our method consistently outperforms the Frozen and AgentFlow with FlowGRPO baselines on all benchmarks, showing robustness across model capacities. 
Notably, our Qwen2.5-3B variant already surpasses AgentFlow with Qwen2.5-7B on several tasks, achieving 72.0\% on Bamboogle, 72.5\% on 2Wiki, 32.3\% on GAIA, and 40.0\% on AIME24. 
Scaling to Qwen2.5-7B further improves performance to 81.6\%, 81.5\%, 41.7\%, and 50.0\%, respectively. 
The same trend holds for Qwen3.5, where Qwen3.5-9B outperforms Qwen3.5-4B and reaches 85.6\% on Bamboogle, 85.0\% on 2Wiki, 49.6\% on GAIA, and 73.3\% on AIME24. 
These results show that our method generalizes across backbone families and scales effectively with stronger models.

\vspace{-2mm}
\section{Conclusions and Limitations}
\vspace{-2mm}

\noindent
\textbf{Conclusions.}
This paper has presented a novel $\Phi$-MPO approach to agentic learning. In particular, our proposed learning objective has been introduced to address both long-horizon and data imbalance problems in multi-turn agentic reasoning. Our theoretical analysis has also demonstrated the effectiveness and robustness of the proposed framework. Extensive experiments on multiple benchmarks have further confirmed the effectiveness of our method compared to other methods.

\noindent
\textbf{Limitations.}
Our paper adopts a set of design choices and hyper-parameters aligned with our theoretical analysis, but this also introduces several limitations, particularly in tuning the focal parameter $\gamma$, the weighting coefficients between trajectory-level and multi-level objectives, and the maximum rollout budget.
In addition, the effectiveness of our approach relies on the quality of the on-the-fly rollouts used to construct preference pairs, which may be sensitive to rollout diversity, reward reliability, and the stability of trajectory-level and step-level supervision. In long-horizon agentic settings, noisy tool feedback, imperfect final-outcome judgments, or insufficient exploration may lead to suboptimal preference signals and biased optimization. These limitations highlight the need for future studies on more robust preference optimization strategies for agentic learning.

\noindent
\textbf{Acknowledgment.} This work is partly supported by DOW/ARO (No. W911NF261A114), NSF/IIS CAREER (No. 2442295), NSF/IIS (No. 2501021), NSF/BIO (No. 2524623), and NSF/OIA (No. 2445877). This research is also supported by the Arkansas High Performance Computing Center which is funded through multiple National Science Foundation grants and the Arkansas Economic Development Commission.

\bibliographystyle{abbrv}
\bibliography{references}

\newpage

\appendix

\section*{\Large {Technical Appendices} }

\section{Proof of Theorems}

\subsection{Proof of Theorem \ref{thm:KGRPO_DPO_equivalence}}

Let $q$ denote a prompt and let $\mathcal G(q) = \{\tau_1,\dots,\tau_K\}, \qquad \tau_i \sim \pi_\theta(\cdot|q)$ be a group of $K \ge 2$ sampled trajectories.
Assume binary trajectory rewards $r(\tau) \in \{0,1\}$.
Then, we define preferred $\mathcal P(q)$ and dispreferred $\mathcal N(q) $ trajectory sets, $m = |\mathcal P(q)|$, $n = |\mathcal N(q)|$, $m+n=K$, and assume non-degenerate groups ($1 \le m \le K-1$).
The trajectory-level score function can be defined as $s_\theta(q,\tau) = \nabla_\theta \log \pi_\theta(\tau|q)$.

\noindent
\textbf{Theorem \ref{thm:KGRPO_DPO_equivalence}: Pairwise Equivalence of DPO and K-GRPO up to Positive Weighting.}
Under unclipped trajectory-level updates with binary rewards, the K-GRPO gradient can be written as

\begin{equation}
g_\theta^{\mathrm{K\text{-}GRPO}}(q)
=
\alpha(q)
\;
\mathbb E_{\tau^+ \sim \mathrm{Unif}(\mathcal P(q)),
\;
\tau^- \sim \mathrm{Unif}(\mathcal N(q))}
\Big[
s_\theta(q,\tau^+) - s_\theta(q,\tau^-)
\Big],
\end{equation}
where the group-dependent positive weight is $\alpha(q) = \sqrt{\hat p_q(1-\hat p_q)} = \frac{\sqrt{mn}}{K}$, and $\hat p_q = \frac{m}{K}$.
Moreover, the DPO gradient can be written as

\begin{equation}
\nabla_\theta L_{\mathrm{DPO}}
=
-\mathbb E_{q,\tau^+,\tau^-}
\Big[
w_\theta(q,\tau^+,\tau^-)
\big(
s_\theta(q,\tau^+) - s_\theta(q,\tau^-)
\big)
\Big],
\end{equation}
where $w_\theta(q,\tau^+,\tau^-) = \beta \sigma(-\Delta_\theta(q,\tau^+,\tau^-)) >0$.
Therefore K-GRPO and DPO share the same pairwise gradient basis $s_\theta(q,\tau^+) - s_\theta(q,\tau^-)$
and differ only by positive weighting. Hence K-GRPO is pairwise equivalent to DPO up to positive weighting.

\paragraph{Proof.}

GRPO uses group-normalized advantages \cite{wu2025takes, shao2024deepseekmath}. 
Under binary rewards, all preferred trajectories share one advantage value $A^+$ and all dispreferred trajectories share another value $A^-$:

\begin{equation}
A^+ =
\sqrt{\frac{1-\hat p_q}{\hat p_q}},
\qquad
A^- =
-\sqrt{\frac{\hat p_q}{1-\hat p_q}},
\qquad
\hat p_q = \frac{m}{K}.
\end{equation}

Ignoring PPO clipping and working at the trajectory level, the GRPO score-function gradient for one group is

\begin{equation}
g_\theta^{\mathrm{K\text{-}GRPO}}
=
\frac{1}{K}
\sum_{i=1}^{K}
A_i
s_\theta(q,\tau_i).
\end{equation}

Substituting the binary-reward advantages yields

\begin{equation}
g_\theta^{\mathrm{K\text{-}GRPO}}
=
\frac{1}{K}
\left(
\sum_{\tau^+ \in \mathcal P(q)} A^+ s_\theta(q,\tau^+)
+
\sum_{\tau^- \in \mathcal N(q)} A^- s_\theta(q,\tau^-)
\right).
\end{equation}

Since all preferred trajectories share $A^+$ and all dispreferred trajectories share $A^-$,

\begin{equation}
g_\theta^{\mathrm{K\text{-}GRPO}}
=
\frac{1}{K}
\left(
m A^+ \bar s_\theta^+
+
n A^- \bar s_\theta^-
\right),
\end{equation}

where

\begin{equation}
\bar s_\theta^+
=
\frac{1}{m}
\sum_{\tau^+ \in \mathcal P(q)}
s_\theta(q,\tau^+),
\qquad
\bar s_\theta^-
=
\frac{1}{n}
\sum_{\tau^- \in \mathcal N(q)}
s_\theta(q,\tau^-).
\end{equation}

Substituting the binary advantage values gives

\begin{equation}
mA^+ = \sqrt{mn},
\qquad
nA^- = -\sqrt{mn}.
\end{equation}

Thus

\begin{equation}
g_\theta^{\mathrm{K\text{-}GRPO}}
=
\frac{\sqrt{mn}}{K}
\left(
\bar s_\theta^+ - \bar s_\theta^-
\right).
\end{equation}

Observe that the difference between group averages can be written as the expectation over all preferred-dispreferred trajectory pairs:

\begin{equation}
\bar s_\theta^+ - \bar s_\theta^-
=
\mathbb E_{\tau^+,\tau^-}
\Big[
s_\theta(q,\tau^+) - s_\theta(q,\tau^-)
\Big].
\end{equation}

Therefore

\begin{equation}
g_\theta^{\mathrm{K\text{-}GRPO}}
=
\frac{\sqrt{mn}}{K}
\;
\mathbb E_{\tau^+,\tau^-}
\Big[
s_\theta(q,\tau^+) - s_\theta(q,\tau^-)
\Big].
\end{equation}

Thus K-GRPO is a positive-weighted average of pairwise preference gradients.

From the DPO objective \cite{wu2025takes}, we can derive the gradient of the DPO loss as follows:

\begin{equation}
\nabla_\theta \mathcal{L}_{\mathrm{DPO}}
=
-\mathbb E
\Big[
w_\theta(q,\tau^+,\tau^-)
\big(
s_\theta(q,\tau^+) - s_\theta(q,\tau^-)
\big)
\Big],
\end{equation}

with $w_\theta(q,\tau^+,\tau^-)>0$.

Hence both gradients lie in the cone generated by pairwise preference directions

\begin{equation}
s_\theta(q,\tau^+) - s_\theta(q,\tau^-),
\end{equation}

differing only by positive weighting.

\hfill $\square$

\subsection{Proof of Theorem \ref{thm:focal-flow-dpo}}

\noindent
\textbf{Theorem \ref{thm:focal-flow-dpo}: Suppression of Easy-Majority Dominance in $\Phi$-MPO.} Let
us define $\mathcal{L}_\gamma(p)=-(1-p)^\gamma \log p$, $p=\sigma(z)$, $\gamma\ge 0$. Then, the margin-gradient magnitude can be formed as $\psi_\gamma(p):=\left|\frac{\partial \mathcal{L}_\gamma(p)}{\partial z}\right|$, 
Suppose that, at level $\ell$, the aligned pairs are partitioned into an easy-majority set $E^{(\ell)}$ and a hard-minority set $H^{(\ell)}$ such that
\[
p_{ij}^{(\ell)} \ge 1-\varepsilon \quad \forall (i,j)\in E^{(\ell)},
\qquad
p_{ij}^{(\ell)} \le 1-\delta \quad \forall (i,j)\in H^{(\ell)},
\]
for some $0<\varepsilon<\delta<1$.
Then
\[
\sum_{(i,j)\in E^{(\ell)}} \psi_\gamma\!\left(p_{ij}^{(\ell)}\right)
\le
|E^{(\ell)}|(\gamma+1)\varepsilon^{\gamma+1},
\]
and
\[
\sum_{(i,j)\in H^{(\ell)}} \psi_\gamma\!\left(p_{ij}^{(\ell)}\right)
\ge
|H^{(\ell)}|\delta^{\gamma+1}.
\]
Therefore,
\[
\frac{
\sum_{(i,j)\in E^{(\ell)}} \psi_\gamma\!\left(p_{ij}^{(\ell)}\right)
}{
\sum_{(i,j)\in H^{(\ell)}} \psi_\gamma\!\left(p_{ij}^{(\ell)}\right)
}
\le
\frac{|E^{(\ell)}|}{|H^{(\ell)}|}
(\gamma+1)\left(\frac{\varepsilon}{\delta}\right)^{\gamma+1}.
\]
Hence Fair MPO suppresses the optimization influence of overrepresented easy pairs and mitigates imbalance caused by easy-majority dominance.

\noindent
\textbf{Proof.} 
Since
\[
\mathcal{L}_\gamma(p)=-(1-p)^\gamma\log p,
\]
we have
\[
\frac{\partial \mathcal{L}_\gamma}{\partial z}
=
(1-p)^\gamma\big(\gamma p\log p + p - 1\big),
\]
and thus
\[
\psi_\gamma(p)
=
(1-p)^\gamma\big((1-p)-\gamma p\log p\big).
\]
If $p\ge 1-\varepsilon$, then $1-p\le \varepsilon$ and
\[
-\log p \le \frac{1-p}{p},
\]
so
\[
-\gamma p\log p \le \gamma(1-p).
\]
Therefore
\[
\psi_\gamma(p)
\le
(1-p)^\gamma\big((1-p)+\gamma(1-p)\big)
=
(\gamma+1)(1-p)^{\gamma+1}
\le
(\gamma+1)\varepsilon^{\gamma+1}.
\]
Summing over $E^{(\ell)}$ yields the first bound.

If $p\le 1-\delta$, then $1-p\ge \delta$, and since $-\gamma p\log p\ge 0$,
\[
\psi_\gamma(p)
\ge
(1-p)^{\gamma+1}
\ge
\delta^{\gamma+1}.
\]
Summing over $H^{(\ell)}$ yields the second bound. Taking the ratio proves the claim.

\section{Experiments}

\subsection{Implementation.} 
We adopt the implementation of \cite{li2025flow} in our experiments.
Similar to \cite{li2025flow}, we adopt Qwen2.5-7B-Instruct  for  \textit{Action Planner}, \textit{Tool Executor}, \textit{Executive Verifier}, and \textit{Solution Generator}
Only the \textit{Action Planner} is trainable, while the others remain fixed during training.
Our framework is equipped with five interactive tools:
(1) \textbf{\textit{Base Generator}} uses Qwen2.5-7B-Instruct, which serves as the default reasoning engine when the planner chooses not to invoke an external tool; 
(2) \textbf{\textit{Python Coder}} generates and executes Python scripts for a given query and returns the execution results;
(3) \textbf{\textit{Google Search}} retrieves web search results and returns a summary of the top-$K$ retrieved contents;
(4) \textbf{\textit{Wikipedia Search}} retrieves relevant Wikipedia articles for a given query and returns a summarized response;
and (5) \textbf{\textit{Web Search}} summarizes the content of a given web page.
Following standard protocols \cite{li2025flow, jin2025search}, we use a learning rate of $1 \times 10^{-6}$.
The \textit{Action Planner} generates actions with a sampling temperature of $0.5$ to balance exploration and exploitation.
To stabilize training and prevent policy collapse \cite{jin2025search, li2025flow}, we set a KL-divergence penalty $\beta = 0.001$.
The maximum output length of the planner is set to 2048 tokens to allow sufficient exploration during rollout generation.
We use a batch size of 12 and sample $K=4$ rollouts for each input.
It should be noted that if all rollouts obtain the same ranking score $S(\tau)$, we sample an additional $K$ rollouts and recompute the ranking over all $2K$ candidates. If the scores remain tied, we exclude the sample from preference construction for the current update to avoid introducing arbitrary or noisy preference labels.
We set the maximum number of turns in each rollout at $3$ for training and $10$ for evaluation.
The reward function is provided by an GPT-4o (LLM-as-a-judge).
For stability, the LLM engines used within the tools operate with a temperature of 0.0, ensuring deterministic outputs.
The full training process is conducted on 12 NVIDIA L40S GPUs.
We adopt the agent prompts and memory update mechanism from \cite{li2025flow}.
We set the focal hyper-parameter $\gamma$ to $2.0$, and weighting parameters $\lambda_{\mathrm{DPO}}$ and $\lambda_{\mathrm{\Phi-MPO}}$ to $1.0$.

\subsection{Additional Ablation Study}

\begin{wraptable}[6]{r}{0.5\linewidth}
\centering
\vspace{-5mm}
\caption{Effectiveness of Weighting Parameters (Accuracy).}
\label{tab:abl-weighting}
\setlength{\tabcolsep}{2pt}
\vspace{-2mm}
\resizebox{1.0\linewidth}{!}{
\begin{tabular}{cc|cccc}
\hline
$\lambda_{\mathrm{DPO}}$ & $\lambda_{\mathrm{\Phi-MPO}}$    & Bamboogle & 2Wiki & GAIA & AIME24 \\
\hline
0.5 & 1.0 & 72.8          & 73.0          & 33.1          & 45.0    \\
1.0 & 1.0 & \textbf{81.6} & \textbf{81.5} & \textbf{41.7} & \textbf{50.0} \\
1.0 & 0.5 & 67.2          & 67.0          & 26.8          & 33.3          \\
\hline
\end{tabular}
}
\end{wraptable}

\textbf{Effectiveness of Weighting Parameters.}
Table~\ref{tab:abl-weighting} studies the impact of the weighting parameters $\lambda_{\mathrm{DPO}}$ and $\lambda_{\mathrm{\Phi-MPO}}$, which balance the trajectory-level DPO loss and the Fair MPO in our framework.
As shown in Table~\ref{tab:abl-weighting}, using equal weights for the two objectives yields the best performance on all benchmarks, achieving 81.6\% on Bamboogle, 81.5\% on 2Wiki, 41.7\% on GAIA, and 50.0\% on AIME24.
Reducing the trajectory-level weight to $\lambda_{\mathrm{DPO}}=0.5$ leads to a moderate drop in performance, showing that the global trajectory-level preference signal remains important for long-horizon reasoning.
However, reducing the multi-level weight to $\lambda_{\mathrm{\Phi-MPO}}=0.5$ causes a much larger degradation across all benchmarks, with the performance dropping to 67.2\% on Bamboogle, 67.0\% on 2Wiki, 26.8\% on GAIA, and 33.3\% on AIME24.
These results indicate that the fair multi-level objective plays a more critical role in our framework, since it provides fine-grained supervision over intermediate reasoning steps and is essential for effective agentic learning.

\end{document}